\newtheorem{theorem}{Theorem}
\newcommand{\eat}[1]{}
\newcommand{\han}[1]{{\color{purple}{#1}}}
\newcommand{\TODO}[1]{{\color{red}TODO:{#1}}}
\newcommand\beftext[1]{{\color[rgb]{0.5,0.5,0.5}{BEFORE:#1}}}
\newcommand{\TODO}[1]{}
\newcommand{\beftext}[1]{}
\newcommand{\han}[1]{#1}
\title{Erase then Rectify: A Training-Free Parameter Editing Approach for Cost-Effective Graph Unlearning}
\author{
    Zhe-Rui Yang\textsuperscript{\rm 1,2,3}, Jindong Han\textsuperscript{\rm 4}, Chang-Dong Wang\textsuperscript{\rm 1,3,*}, Hao Liu\textsuperscript{\rm 2,4,*}
}
\begin{document}

\maketitle

\begin{abstract}
Graph unlearning, which aims to eliminate the influence of specific nodes, edges, or attributes from a trained Graph Neural Network (GNN), is essential in applications where privacy, bias, or data obsolescence is a concern. However, existing graph unlearning techniques often necessitate additional training on the remaining data, leading to significant computational costs, particularly with large-scale graphs. To address these challenges, we propose a two-stage training-free approach, \emph{Erase then Rectify}~(ETR), designed for efficient and scalable graph unlearning while preserving the model utility. Specifically, we first build a theoretical foundation showing that masking parameters critical for unlearned samples enables effective unlearning. Building on this insight, the Erase stage strategically edits model parameters to eliminate the impact of unlearned samples and their propagated influence on intercorrelated nodes. To further ensure the GNN's utility, the Rectify stage devises a gradient approximation method to estimate the model's gradient on the remaining dataset, which is then used to enhance model performance. Overall, ETR achieves graph unlearning without additional training or full training data access, significantly reducing computational overhead and preserving data privacy. Extensive experiments on seven public datasets demonstrate the consistent superiority of ETR in model utility, unlearning efficiency, and unlearning effectiveness, establishing it as a promising solution for real-world graph unlearning challenges.

\begin{links}
  \link{Code}{https://github.com/AllminerLab/ETR}
\end{links}
\end{abstract}

%

\section{Introduction}
Machine unlearning has garnered attention for its ability to remove the influence of specific data subsets from a well-trained machine learning model~\cite{DBLP:journals/corr/abs-2209-02299}, which is crucial for forgetting sensitive, mislabeled, or outdated information~\cite{DBLP:journals/jmlr/LiuT17}. 
Typically, the subset of data to be unlearned is much smaller than the entire training dataset. As a result, one primary goal of machine unlearning is to efficiently eliminate the impact of these unlearned samples while preserving the model’s predictive power, thereby avoiding the costly process of retraining the model from scratch on the remaining data~\cite{DBLP:journals/csur/XuZZZY24}.

In the graph domain, there is also a strong demand to forget specific information, such as users requesting the removal of personal data in social networks. 
In particular, Graph Neural Networks (GNNs) are widely used to process graph-structured data through message propagation and neighborhood aggregation~\cite{DBLP:journals/aiopen/ZhouCHZYLWLS20}. However, due to the interconnected nature of graphs, unlearned samples can significantly affect their neighbors, making conventional machine unlearning methods unapplicable~\cite{DBLP:conf/iclr/ChengDHAZ23}. As a result, graph unlearning has emerged as a specialized task aimed at removing unlearned samples and mitigating their impact on other nodes~\cite{DBLP:journals/corr/abs-2310-02164}.

Recently, several graph unlearning approaches have been proposed. For instance, GIF~\cite{DBLP:conf/www/WuYQS0023} and CGU~\cite{DBLP:conf/iclr/Chien0M23} approximate parameter changes caused by data removal and update the affected parameters. However, these methods primarily focus on forgetting unlearned samples while overlooking the predictive capability on the remaining data, which can significantly compromise model utility~\cite{DBLP:conf/aaai/LiZWZLW24}. To address this, some recent studies~\cite{DBLP:conf/aaai/LiZWZLW24, DBLP:conf/uss/WangH023} consider both predictive and unlearning objectives when an unlearning request is received. However, they typically require additional training on the remaining data, leading to high computational overhead, particularly with large graphs, even when only a few samples are unlearned. This substantial resource demand limits the efficiency and scalability of these methods in real-world applications.

In this work, we focus on developing an efficient and scalable graph unlearning method that preserves model utility. However, achieving this goal is non-trivial. First, the interconnected nature of graph data means that unlearned samples can significantly affect their neighbors through message propagation, complicating the removal of both the unlearned samples and their influence. Second, the remaining dataset is typically much larger than the unlearned subset, leading to substantial computational overhead when accessing the remaining data. Balancing model utility with low computational cost presents another significant challenge.

To address these challenges, we propose \textbf{E}rase \textbf{t}hen \textbf{R}ectify (ETR), a training-free, two-stage approach for cost-effective graph unlearning. First, we establish a theoretical foundation showing that masking parameters critical to unlearned samples enables effective unlearning. Based on this, in the Erase stage, we propose a neighborhood-aware parameter editing strategy to remove the impact of unlearned samples and their cascading effects, while minimizing the impact on model utility. In the Rectify stage, we introduce a subgraph-based gradient approximation method to estimate the unlearned model’s gradient on the remaining data, further enhancing model performance. Notably, ETR achieves graph unlearning without requiring explicit model training or access to the entire training dataset, avoiding computational overhead and ensuring data privacy. Overall, the main contributions are as follows:

\begin{itemize}
  \item We theoretically prove that masking parameters crucial for unlearned samples enable effective graph unlearning.
  \item We propose a training-free neighborhood-aware parameter editing method for graph unlearning that effectively forgets unlearned samples and their influence on intercorrelated samples.
  \item We propose a gradient approximation method to enhance GNN performance on the remaining graph without requiring access to the entire training dataset, thereby ensuring efficiency and scalability on large-scale graphs.
  \item Extensive experiments demonstrate the model utility, unlearning efficiency, and unlearning efficacy of ETR. Notably, ETR achieves an average of 4583.9x less time and 4.2x less memory usage than retraining from scratch.
\end{itemize}

\section{Related work}
\subsection{Graph Unlearning}
Graph unlearning methods are divided into exact and approximate approaches, aiming to obtain a model that is exactly or approximately equivalent to retraining from scratch~\cite{DBLP:journals/corr/abs-2310-02164}. Current efforts primarily focus on approximate methods. For example, GraphEraser~\cite{DBLP:conf/ccs/Chen000H022} and GUIDE~\cite{DBLP:conf/uss/WangH023} follow the SISA~\cite{DBLP:conf/sp/BourtouleCCJTZL21} paradigm, partitioning the training set into multiple shards, with a separate model trained for each shard. Other approaches~\cite{DBLP:conf/www/WuYQS0023, DBLP:conf/kdd/WuS0WW23, DBLP:conf/iclr/ChenLLH23, DBLP:conf/iclr/Chien0M23} use influence functions to approximate the impact of unlearning on model parameters. GNNDelete~\cite{DBLP:conf/iclr/ChengDHAZ23} freezes model parameters while training additional parameters for unlearning. MEGU~\cite{DBLP:conf/aaai/LiZWZLW24} simultaneously trains the model’s prediction and unlearning objectives. Despite their effectiveness, these methods still face scalability issues with large graphs.

\subsection{Training-Free Machine Unlearning}
Several training-free methods have been proposed for efficient machine unlearning. Fisher Forgetting~\cite{DBLP:conf/cvpr/GolatkarAS20} and NTK~\cite{DBLP:conf/eccv/GolatkarAS20} are weight scrubbing methods that add noise to parameters informative for unlearned samples. However, Fisher Forgetting is computationally expensive and harms predictive performance~\cite{10113700}, while NTK requires additional models for unlearning. The closest work to ours is SSD~\cite{DBLP:conf/aaai/FosterSB24}, which dampens parameters crucial for the unlearned dataset while preserving others. Unlike SSD, we account for the impact of unlearned samples on other samples through message propagation. We also propose adaptive hyperparameter selection to improve versatility across datasets and introduce the Rectify strategy to enhance performance on remaining data.

\section{Preliminaries}
\subsection{Notations and Background}
In this paper, we employ $\mathcal{G}=\left( \mathcal{V}, \mathcal{E}, \mathcal{X} \right)$ to denote a graph comprising $\left| \mathcal{V} \right|$ nodes and $\left| \mathcal{E} \right|$ edges. Each node $\mathbf{v}_i \in \mathcal{V}$ has a $d$-dimensional feature vector $\mathbf{x}_i \in \mathcal{X}$. A major category of GNNs is message-passing neural networks (MPNNs)~\cite{DBLP:conf/icml/GilmerSRVD17}, such as GCN~\cite{DBLP:conf/iclr/KipfW17} and GAT~\cite{DBLP:conf/iclr/VelickovicCCRLB18}. MPNNs propagate and aggregate features from neighboring nodes, then transform these aggregated features to update node representations~\cite{DBLP:conf/icml/GilmerSRVD17}. Following previous works~\cite{DBLP:conf/ccs/Chen000H022, DBLP:journals/corr/abs-2310-02164}, we focus on node classification in this study. 
We denote the training dataset as $D$, the unlearned dataset as $D_f$, the remaining dataset as $D_r$ (i.e., $D \setminus D_f$), and the $k$-hop neighborhood of $D_f$ as $D_k$.
\begin{figure*}[!t]
    \centering
    \includegraphics[width=0.9\linewidth]{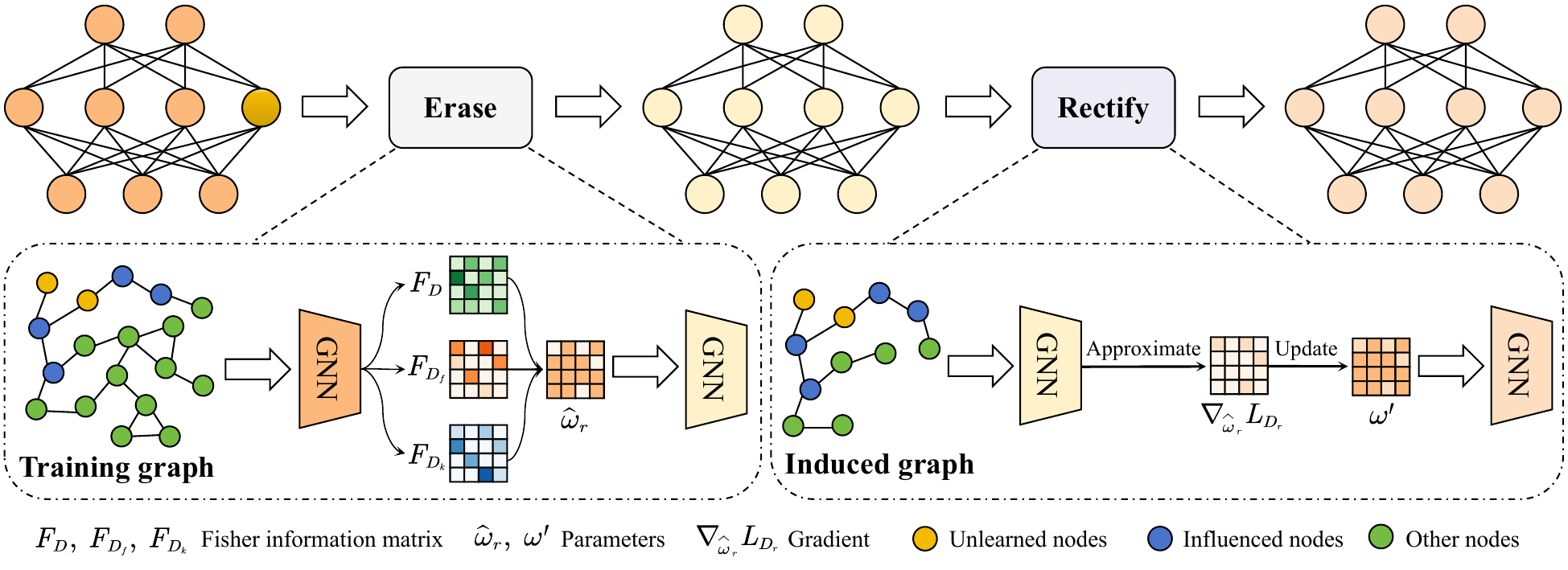}
    \caption{The framework of ETR: Forgetting the influence of unlearned samples through Erase, followed by enhancing the model performance on the remaining dataset via Rectify.}
    \label{fig:framework}
\end{figure*}

\subsection{Fisher Information Matrix (FIM)}
The FIM uses the second-order derivative of the loss function to quantify parameter sensitivity, providing a measure of importance with respect to input samples~\cite{DBLP:conf/icml/GuoGHM20}. For a distribution $p(y|x, w)$, the FIM and its first-order derivative property~\cite{kay1993fundamentals} are given as follows:

\begin{small}
\begin{align}
    F_D =& \mathbb{E}_{x,y}\left[ -\nabla _{w}^{2}\log p\left( y|x,w \right) \right] \\
    =& \mathbb{E}_{x,y}\left[ \nabla _w\log p\left( y|x,w \right) \nabla _w\log p\left( y|x,w \right) ^T \right] 
    \label{FIM}
\end{align}
\end{small}

Notably, obtaining the FIM is computationally expensive. A common approach is to use its diagonal $diag(F)$~\cite{doi:10.1073/pnas.1611835114}, which can be computed efficiently using the first-order derivative. The $i$-th diagonal element of the FIM over dataset $D$ is denoted as $F_{D, ii}$, representing the importance of the $i$-th parameter with respect to $D$. \eat{\han{the details of diagonal approximation is unclear to me.}}

\subsection{Problem Statement}
Given a graph $\mathcal{G}$, the optimal GNN model $f_\mathcal{G}$ trained on $\mathcal{G}$, and a graph unlearning request $\Delta \mathcal{G}=\left( \Delta \mathcal{V}, \Delta \mathcal{E}, \Delta \mathcal{X} \right)$, the objective of graph unlearning is to derive a new GNN model $\widehat{f}$ that minimizes the discrepancy between $\widehat{f}$ and $f_{\mathcal{G} \setminus \Delta \mathcal{G}}$. Here, $f_{\mathcal{G} \setminus \Delta \mathcal{G}}$ represents the GNN model retrained from scratch on the remaining graph $\mathcal{G} \setminus \Delta \mathcal{G}$. 

Graph unlearning can be categorized into three tasks:
\textbf{Node Unlearning}: $\Delta \mathcal{G}=\left( \Delta \mathcal{V}, \Delta \mathcal{E}, \Delta \mathcal{X} \right)$, where $\Delta \mathcal{V}$ represents unlearned nodes, $\Delta \mathcal{E}$ denotes edges connecting to $\Delta \mathcal{V}$, and $\Delta \mathcal{X}$ represents the features of nodes in $\Delta \mathcal{V}$. \textbf{Edge Unlearning}: $\Delta \mathcal{G}=\left( \emptyset, \Delta \mathcal{E}, \emptyset \right)$, where $\Delta \mathcal{E}$ represents the set of edges to be unlearned. \textbf{Feature Unlearning}: $\Delta \mathcal{G}=\left( \emptyset, \emptyset, \Delta \mathcal{X} \right)$, where $\Delta \mathcal{X}$ represents the unlearned features, and will be padded with zeros. We will use the node unlearning task to illustrate the proposed method, and describe how the proposed method applies to edge unlearning and feature unlearning tasks in the Appendix.

\section{Methodology}
\subsection{Framework Overview}
The overall framework of ETR is depicted in~\figurename~\ref{fig:framework}. In the Erase stage, we evaluate parameter importance with respect to input samples based on the FIM. Subsequently, we identify \eat{parameters remembering}\han{critical parameters concerning} the unlearned samples and their influence on other samples. 
We then modify these parameters to erase the \eat{influence}\han{effect} of the unlearned samples.
In the Rectify stage, we utilize the induced graph of unlearned samples to approximate model gradients on the remaining dataset, which are then used to enhance the model performance on the remaining dataset. \eat{\han{use what to rectify? based on the approximated gradients?}}

\subsection{Graph Unlearning via Parameter Masking}
Previous research indicates that certain neurons or parameters are critical in memorizing specific samples~\cite{DBLP:conf/icml/MainiMSLKZ23}, and that gradient manipulation can prevent the memorization of noisy information~\cite{DBLP:conf/nips/ChenZCDLLD21}.
Motivated by these findings, we propose to achieve efficient unlearning by directly masking the parameters responsible for memorizing the data to be unlearned, as described below:

\begin{small}
    \begin{equation}
    \label{mask}
    \widehat{\omega}_{r,j}=\begin{cases}
	   0,&		j\in M\\
	   \omega _{j}^{*},&		j\notin M\\
    \end{cases}
    \end{equation}
\end{small}
where $\omega ^{*}$ represents the optimal parameters on $D$, and $M$ denotes the set of parameters responsible for memorizing the unlearned dataset. To explore the effectiveness of the masking strategy, we provide theoretical justification for its ability to achieve graph unlearning as follows:

\begin{theorem}
\label{theorem}
For a GNN model, if we approximate the FIM $F$ with its diagonal $diag(F)$, and assume all elements of $diag(F)$ are strictly positive, the mean squared distance between the optimal model parameters trained on $D_r$ and the parameters obtained from (\ref{mask}), denoted as $Q = \frac{1}{\left| \omega \right|}\sum_j{\left( \omega _{r,j}^{*}-\,\,\widehat{\omega}_{r,j} \right)}^2$, has the following upper bound:
\begin{small}
    \begin{equation}
    Q \leqslant \frac{1}{\left| \omega \right|}\left( c_1\sum_{j\in M}{\frac{1}{F_{D_r,jj}^{2}}}+c_2\sum_{j\notin M}{\frac{F_{D_f,jj}^{2}}{F_{D,jj}^{2}F_{D_r,jj}^{2}}}+c_3 \right) 
    \end{equation}
\end{small}
where $c_1$, $c_2$, $c_3$ are constants, and $|\omega|$ denotes the number of parameters. \eat{\han{$|\omega|$ is not formally defined}}
\end{theorem}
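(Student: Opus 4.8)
The plan is to reduce the bound to a chain of second-order (influence-function / Newton-step) approximations around the relevant stationary points, exploiting that for a negative log-likelihood loss the Hessian coincides with the FIM $F$, which we then replace by its (strictly positive) diagonal. Besides $\omega^{*}$ (optimal on $D$) and $\omega_r^{*}$ (optimal on $D_r$) I would bring in the auxiliary optimum $\omega_f^{*}$ on $D_f$, and I would invoke the standard working assumptions that the loss decomposes additively over data subsets --- so $\nabla L_D=\nabla L_{D_r}+\nabla L_{D_f}$ and, on the diagonal, $F_D=F_{D_r}+F_{D_f}$ --- that the parameters stay in a bounded region ($|\omega_j^{*}|,|\omega_{f,j}^{*}|\le B$), and that the diagonal FIM entries are uniformly bounded above (they are bounded below by hypothesis). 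All Taylor remainders are swept into $c_3$.

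First I would split $Q$ along the two branches of~(\ref{mask}) (recall $\widehat{\omega}_{r,j}=0$ on $M$ and $\widehat{\omega}_{r,j}=\omega_j^{*}$ off $M$):
\begin{small}
\begin{equation}
|\omega|\,Q=\sum_{j\in M}\bigl(\omega_{r,j}^{*}\bigr)^2+\sum_{j\notin M}\bigl(\omega_{r,j}^{*}-\omega_{j}^{*}\bigr)^2 ,
\end{equation}
\end{small}
so it suffices to bound $(\omega_{r,j}^{*}-\omega_{j}^{*})^2$ coordinatewise and, on $M$, to control $(\omega_{r,j}^{*})^2$.

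The core step is a coordinatewise estimate of $\omega_{r,j}^{*}-\omega_{j}^{*}$. Taylor-expanding the stationarity condition $\nabla L_{D_r}(\omega_r^{*})=0$ about $\omega^{*}$, using $\nabla L_{D_r}=\nabla L_D-\nabla L_{D_f}$ with $\nabla L_D(\omega^{*})=0$, gives $H_{D_r}(\omega^{*})(\omega_r^{*}-\omega^{*})\approx\nabla L_{D_f}(\omega^{*})$; the Hessian-FIM identity plus the diagonal approximation turn this into $(\omega_r^{*}-\omega^{*})_j\approx(\nabla L_{D_f}(\omega^{*}))_j/F_{D_r,jj}$. I would bound the numerator through two further expansions --- $(\nabla L_{D_f}(\omega^{*}))_j\approx F_{D_f,jj}\,(\omega^{*}-\omega_f^{*})_j$ (expand $\nabla L_{D_f}$ about $\omega_f^{*}$) and $(\omega^{*}-\omega_f^{*})_j\approx-(\nabla L_D(\omega_f^{*}))_j/F_{D,jj}$ (expand $\nabla L_D$ about $\omega^{*}$) --- and finally by $|(\nabla L_D(\omega_f^{*}))_j|\le\sqrt{F_{D,jj}}$, which is exactly the Cauchy-Schwarz/Jensen consequence of the second-moment form of the FIM in~(\ref{FIM}). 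Chaining and squaring gives $(\omega_{r,j}^{*}-\omega_{j}^{*})^2\lesssim F_{D_f,jj}^2/(F_{D,jj}\,F_{D_r,jj}^2)$; one more use of the upper bound on $F_{D,jj}$ upgrades this to $c_2\,F_{D_f,jj}^2/(F_{D,jj}^2F_{D_r,jj}^2)$, the second term of the claim. For $j\in M$ I would write $(\omega_{r,j}^{*})^2\le 2(\omega_{r,j}^{*}-\omega_{j}^{*})^2+2(\omega_j^{*})^2$; the first piece is bounded as above and, via $F_{D_f,jj}\le F_{D,jj}$, collapses to $c_1/F_{D_r,jj}^2$, while $\sum_{j\in M}(\omega_j^{*})^2\le B^2|\omega|$ contributes (together with the Taylor remainders) the residual $c_3$. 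Summing the two blocks and dividing by $|\omega|$ gives the stated inequality.

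I expect the main obstacle to be this core step: pushing the influence-function / Newton-step approximations through while tracking precisely which Fisher matrix --- $F_D$, $F_{D_r}$, or $F_{D_f}$ --- governs each expansion, so that the exponents line up exactly with the asserted $F_{D_f,jj}^2/(F_{D,jj}^2F_{D_r,jj}^2)$, and being explicit about the auxiliary hypotheses (bounded parameters, uniformly bounded diagonal FIM entries, negligible higher-order terms) on which the clean statement quietly relies. Nothing GNN-specific enters beyond these; once the expansions are chosen the remaining algebra is routine.
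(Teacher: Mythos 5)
Your plan takes a genuinely different route from the paper's. The paper never Taylor-expands stationarity conditions and never introduces an auxiliary optimum on $D_f$: it works with the explicit two-layer GNN, writes the empirical FIM as the outer product of the vectorized gradients, defines $b:=F\omega$ so that under the diagonal approximation $\omega^*_j=b_j/F_{D,jj}$ and $\omega^*_{r,j}=b_{r,j}/F_{D_r,jj}$, splits $Q$ over $j\in M$ and $j\notin M$ exactly as you do, and then closes the argument purely algebraically: the $M$-term is immediately $\sum_{j\in M}b_{r,j}^2/F_{D_r,jj}^2$, and for $j\notin M$ it substitutes the exact mixture identity $F_{D,jj}=\frac{|D_f|}{|D|}F_{D_f,jj}+\frac{|D_r|}{|D|}F_{D_r,jj}$ into the common-denominator expression and applies $(x-y)^2\le 2x^2+2y^2$, yielding explicit constants $c_1=\max_j b_{r,j}^2$, $c_2=\max_j\bigl(b_{r,j}|D_f|/|D|\bigr)^2$, and $c_3=\sum_j\bigl(b_j-b_{r,j}|D_r|/|D|\bigr)^2\,2/F_{D,jj}^2$. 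Your influence-function/Newton-step route buys generality (nothing two-layer-specific) and a cleaner statistical interpretation, at the price of hypotheses the paper does not need (bounded parameters, uniformly upper-bounded $F_{D,jj}$, controlled remainders) and of constants that are no longer explicit --- which is immaterial for the inequality as stated but would weaken the paper's follow-up discussion of the threshold $\sqrt{c_1/c_2}$.

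Three points would need shoring up before your plan is a proof. First, ``all Taylor remainders are swept into $c_3$'' is circular as stated: the remainders of the expansions around $\omega^*$ and $\omega^*_f$ scale with $\|\omega^*_r-\omega^*\|^2$ and $\|\omega^*-\omega^*_f\|^2$, i.e.\ with the very quantities being bounded, so you need explicit smoothness and boundedness assumptions (including on $\omega^*_r$, which your $B$-bound omits) to turn them into a bona fide additive constant. Second, the Cauchy--Schwarz step $|(\nabla L_D(\omega_f^*))_j|\le\sqrt{F_{D,jj}}$ uses the FIM evaluated at $\omega^*_f$, while every other $F_{\cdot,jj}$ in your chain (and in the theorem) is at $\omega^*$; you must either assume the diagonal FIM is stable across these points or track the change. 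Third, $F_{D_f,jj}\le F_{D,jj}$ is false in general --- the mixture identity only gives $F_{D_f,jj}\le\frac{|D|}{|D_f|}F_{D,jj}$, and likewise your unweighted additivity $F_D=F_{D_r}+F_{D_f}$ conflicts with the paper's normalized definition; this is harmless up to constants, but the constants should then be stated consistently.
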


The proof of Theorem~\ref{theorem} is provided in the Appendix. According to Theorem~\ref{theorem}, adding the $j$-th parameter to $M$ will increase the upper bound of $Q$ by $\frac{1}{\left| \omega \right|}\left( \frac{c_1}{F_{D_r,jj}^{2}}-\frac{c_2F_{D_f,jj}^{2}}{F_{D,jj}^{2}F_{D_r,jj}^{2}} \right) $. This increase is less than 0 when $\frac{F_{D_f,jj}}{F_{D,jj}}>\sqrt{\frac{c_1}{c_2}} = c$, indicating that the parameter $w_j$ is much more important for $D_f$ than for $D$. Therefore, masking parameters that are crucial for unlearned samples but not for others can reduce the upper bound of $Q$, facilitating the forgetting of the unlearned dataset.

\subsection{Erase: Neighborhood-Aware Parameter Editing}
Despite the above analysis theoretically guaranteeing the effectiveness of the masking strategy in achieving unlearning, it has two major limitations
First, the masking strategy overlook the extent to which $\frac{F_{D_f,jj}}{F_{D,jj}}$ exceeds $c$. In particular, parameters for which $\frac{F_{D_f,jj}}{F_{D,jj}}$ significantly exceeds $c$ are much more critical for the unlearned dataset compared to those where $\frac{F_{D_f,jj}}{F_{D,jj}}$ only slightly exceeds $c$. Second, the masking strategy overlooks the impact of unlearned samples on their neighbors through message propagation.

To address these, we propose the Erase strategy as:
\begin{small}
    \begin{equation}
    \widehat{\omega}_{r,j}=\begin{cases}
	   a \frac{F_{D,jj}}{F_{D_f,jj}}\omega _{j}^{*},& F_{D_f,jj} > \gamma F_{D,jj},\\
    b \frac{{F_{D,jj}}^2}{F_{D_f,jj}F_{D_{k},jj}}\omega _{j}^{*},& F_{D_f,jj}F_{D_k,jj}>{\eta F_{D,jj}}^2\\ &and \ F_{D_f,jj} \leq \gamma F_{D,jj},\\
	\omega _{j}^{*},&		otherwise.\\
\end{cases}
\end{equation}
\end{small}
where $a$, $b$, $\gamma$, and $\eta$ are hyperparameters. 

The advantages of the above design are threefold. 
First, it considers the impact of unlearned samples on their neighbors.
According to the properties of the FIM, 
a large ratio of $\frac{F_{D_f,jj}}{F_{D,jj}}$ or $\frac{F_{D_k,jj}}{F_{D,jj}}$ indicates that the parameter is crucial for $D_f$ or $D_k$, but not for $D$.
Drawing inspiration from Theorem 1, if a parameter is crucial for both $D_f$ and $D_k$, but not for other samples, i.e., satisfying $F_{D_f,jj}F_{D_k,jj} > \eta {F_{D,jj}}^2$, we consider that the parameter has been influenced by message propagation from $D_f$. In such cases, we modify these parameters to forget the influence of unlearned samples.

Second, when modifying parameters, we account for the extent to which they are influenced by unlearned samples. 
Specifically, rather than directly masking the parameters, we use the coefficients $\frac{F_{D,jj}}{F_{D_f,jj}}$ and $\frac{{F_{D,jj}}^2}{F_{D_f,jj}F_{D_{k},jj}}$ to determine the degree of modification. This implies that a large $\frac{F_{D_f,jj}}{F_{D,jj}}$ or $\frac{{F_{D_f,jj}F_{D_{k},jj}}}{{F_{D,jj}}^2}$ necessitates substantial modifications to $\omega _{j}^{*}$ \eat{\han{do you mean $\omega _{j}^{*}$?}}, while a small value requires only minor modifications.

Finally, we balance the two forgetting objectives: forgetting $D_f$ and forgetting the influence of $D_f$ on $D_k$. One of these objectives may dominate the other due to the potential difference in magnitude between $\frac{F_{D,jj}}{F_{D_f,jj}}$ and $\frac{{F_{D,jj}}^2}{F_{D_f,jj}F_{D_{k},jj}}$. To mitigate this, we introduce two balancing coefficients $a$ and $b$ to balance these two forgetting objectives. \eat{\han{$a$, $b$ or $a$, $b$?}}

Note the choice of hyperparameters may vary depending on the dataset. To facilitate hyperparameter selection, we adaptively select them for different datasets. Specifically, $\gamma$ is set as the top $m$\% \eat{\han{can you use another notation to indicate $k$?}} value of $\frac{F_{D_f}}{F_{D}}$, and $\eta$ is set as the top $m$\% value of $\frac{F_{D_f}F_{D_k}}{{F_{D}}^2}$. This ensures that $m$\% of the parameters satisfy $F_{D_f,jj} > \gamma F_{D,jj}$ and $m$\% satisfy $F_{D_f,jj}F_{D_k,jj} > {\eta F_{D,jj}}^2$. Furthermore, we set $a = \gamma$ and $b = \eta$ to balance the magnitudes of $\frac{F_{D,jj}}{F_{D_f,jj}}$ and $\frac{{F_{D_jj}}^2}{F_{D_f,jj}F_{D_{k},jj}}$.

\begin{algorithm}[t]
\caption{ETR: Erase then Rectify}
\label{alg}
\begin{algorithmic}[1]
\Require  
    Unlearned dataset $D_f$; Dataset $D_k$; GNN $f_\mathcal{G}$; Optimal parameters $\omega^{*}$; Gradient $\nabla _{\omega^{*}}L_D$; Hyperparameters $m$ and $\lambda$.
\Statex \textit{\textbf{Erase}}
    \State Calculate the gradients $\nabla _{\omega^{*}}L_{D_f}$ and $\nabla _{\omega^{*}}L_{D_k}$.
    \State Calculate the FIM $F_{D}$, $F_{D_f}$, $F_{D_k}$ via $(\ref{FIM})$.
    \State Obtain $\gamma=top\mbox{-}m\%(\frac{F_{D_f}}{F_{D}})$ and $\eta=top\mbox{-}m\%(\frac{F_{D_f}F_{D_{k}}}{{F_{D}}^2})$
    \For{$j \ in\  range \ |\omega|$}
        \State Obtain $\alpha _j=\frac{F_{D,jj}}{F_{D_f,jj}}$ and $\beta _i=\frac{{F_{D,jj}}^2}{F_{D_f,jj}F_{D_{k},jj}}$
        \If{$F_{D_f,jj} > \gamma F_{D,jj}$}
            \State $\widehat{\omega}_{r,j}=\alpha _j\gamma \omega _j^*$
        \ElsIf{$F_{D_f,jj}F_{D_k,jj}>{\eta F_{D,jj}}^2$}
            \State $\widehat{\omega}_{r,j}=\beta _j\eta \omega _j^*$
        \Else
            \State $\widehat{\omega}_{r,j}=\omega _j^*$
        \EndIf
    \EndFor
    
\Statex \textit{\textbf{Rectify}}
    \State Calculate the gradient $\nabla _{\widehat{\omega}_{r}}L_{D_k}$.
    \State Obtain the gradient $\nabla _{\widehat{\omega}_{r}}L_{D_r}$ via $(\ref{gradient})$.
    \State Obtain the parameters $\omega ^{\prime}$ via $(\ref{update})$.
\Ensure 
    $\omega ^{\prime}$.
\end{algorithmic}
\end{algorithm}

\begin{table*}[!t]
\small
\setlength{\tabcolsep}{2mm}
\centering
\begin{tabular}{cc|cccccccccc}
\hline
\multirow{2}{*}{Bone} & \multirow{2}{*}{Method} & \multicolumn{2}{c}{PubMed}            & \multicolumn{2}{c}{CiteSeer}          & \multicolumn{2}{c}{Cora}              & \multicolumn{2}{c}{CS}                & \multicolumn{2}{c}{Physics}        \\ 
\cmidrule(l){3-4} \cmidrule(l){5-6} \cmidrule(l){7-8} \cmidrule(l){9-10} \cmidrule(l){11-12} 
&                         & F1                  & T               & F1                  & T               & F1                  & T               & F1                  & T               & F1                  & T               \\ \hline
\multirow{8}{*}{GCN}      & Retrain                 & 89.74±0.27          & 39.1          & 76.76±0.72          & 14.1          & 84.35±1.14          & 7.5           & 92.63±0.30          & 285.8         & 95.90±0.10          & 856.7         \\
                          & BEKM        & 80.03±0.72          & 68.2          & 61.11±2.27          & 12.7          & 66.42±3.33          & 11.0          & 89.03±0.49          & 71.7          & 94.53±0.19          & 152.7        \\
                          & BLPA       & 81.63±0.83          & 66.1          & 57.69±2.31          & 13.0          & 61.51±3.48          & 11.5          & 88.88±0.56          & 68.2          & 94.45±0.24          & 142.8        \\
                          & GIF                     & 73.66±0.26          & 0.4           & 62.88±1.34          & \underline{0.2}           & 75.87±1.86          & \underline{0.2}           & 85.19±0.56          & 1.3           & 91.37±0.23          & 3.3           \\
                          & SR                & 88.04±0.25          & 29.0          & 75.35±0.70          & 8.6           & 81.25±0.95          & 8.6           & 91.77±0.28          & 49.0          & 94.72±0.14          & 166.4             \\
                          & Fast              & 88.11±0.24          & 29.1          & 75.83±0.88          & 8.8           & 80.70±1.70          & 8.4           & 91.64±0.29          & 47.8          & 94.74±0.16          & 138.3             \\
                          & GNNDelete         & 84.38±1.27          & 2.32s          & 74.77±0.94          & 1.80s           & 78.82±1.50          & 1.66s           & 90.09±0.59          & 3.81s          & 85.84±3.07          & 6.48s             \\
                          & MEGU                    & \underline{88.26±0.47}          & \underline{0.3}           & \underline{76.04±0.93}          & \underline{0.2}           & \underline{82.62±1.05}          & \underline{0.2}           & \underline{92.39±0.51}          & \underline{0.5}           & \underline{95.40±0.20}          & \underline{0.9}           \\
                          & ETR                     & \textbf{89.48±0.34} & \textbf{0.02} & \textbf{77.09±0.60} & \textbf{0.02} & \textbf{83.54±0.92} & \textbf{0.02} & \textbf{92.72±0.37} & \textbf{0.04} & \textbf{95.71±0.15} & \textbf{0.07} \\ \hline
\multirow{8}{*}{GAT}      & Retrain                 & 88.42±0.36          & 50.2          & 76.52±1.00          & 16.3          & 82.29±1.00          & 9.0           & 92.57±0.43          & 295.47         & 96.26±0.18          & 875.2         \\
                          & BEKM        & 70.30±0.77          & 102.2         & 49.64±1.36          & 23.5          & 56.79±4.83          & 20.7          & 81.10±1.39          & 114.3         & 92.04±0.43          & 214.5        \\
                          & BLPA       & 71.65±0.78          & 98.2          & 49.52±2.44          & 20.1          & 49.70±3.17          & 18.7          & 81.35±0.58          & 94.2          & 91.82±0.43          & 194.5        \\
                          & GIF                     & \underline{88.07±0.22}          & 1.1           & 75.80±1.06          & 0.7           & \underline{81.70±1.38}          & 0.6           & \textbf{92.79±0.36} & 2.1           & \underline{95.84±0.16}          & 5.1          \\
                          & SR                & 86.67±0.41          & 33.6          & 75.92±0.88          & 12.0          & 76.05±1.78          & 12.3          & 89.49±0.64          & 54.8          & 94.76±0.41          & 149.3             \\
                          & Fast              & 86.60±0.33          & 35.4          & 75.89±0.93          & 12.4          & 75.46±0.93          & 12.3          & 89.70±0.95          & 55.3          & 94.86±0.45          & 145.5             \\
                          & GNNDelete         & 83.09±3.67          & 3.09s          & 67.21±4.36          & 2.57s           & 76.31±2.65          & 2.51s           & 91.08±1.22          & 4.75s          & 94.43±0.95          & 8.12s             \\
                          & MEGU                    & 85.23±0.47          & \underline{0.3}           & \underline{76.07±0.79}          & \underline{0.2}           & \textbf{82.84±0.72} & \underline{0.2}           & 91.74±0.32          & \underline{0.6}           & 95.14±0.22          & \underline{1.2}                      \\
                          & ETR                     & \textbf{88.32±0.22} & \textbf{0.03} & \textbf{76.31±1.59} & \textbf{0.03} & 79.70±0.85         & \textbf{0.03} & \underline{92.45±0.13}          & \textbf{0.05} & \textbf{96.19±0.18} & \textbf{0.09}   \\ \hline
\end{tabular}
\caption{Performance comparison in terms of F1 score and running time, where F1 indicates the F1 score, T indicates the running time (in seconds). The best experimental results are highlighted in bold, while the second-best results are underscored.}
\label{tab:result}
\end{table*}

\subsection{Rectify: Model Utility Enhancement}
Although the Erase strategy mitigates the impact on the model's predictive performance by considering the extent to which unlearned samples influence parameters, the parameter editing approach may still \eat{impair}\han{negatively affect} the GNN's performance on $D_r$ to some \eat{extent}\han{degree}. 
In this part, we propose the Rectify strategy to enhance GNN performance on $D_r$ without requiring access to the entire training dataset, \eat{thereby avoiding substantial computational overhead and ensuring data privacy and security.}\han{which minimizes computational overhead while ensuring data privacy and security.}

Firstly, we \eat{approximate}\han{estimate} the gradients of the edited model on $D_r$ using the induced graph of $D_f$
Specifically, we assume that we can store the gradients from the model's final iteration of the training phase. After removing $D_f$, the neighbor structure of $D_r \setminus D_k$ remains unchanged, while only the neighbor structures of $D_f$ and $D_k$ undergo changes. Additionally, during the Erase stage, the \eat{parameters that are modified}\han{modified parameters} are crucial only for $D_f$ and $D_k$, but not for $D_r \setminus D_k$. Therefore, for nodes in $D_r \setminus D_k$, their gradients can be considered nearly unchanged, i.e., 
\begin{small}
    \begin{align}
        \nabla _{\widehat{\omega }_r}l_j\approx \nabla _{\omega ^*}l_j, \forall j\in D_r \setminus D_k
    \end{align}
\end{small}
where $l_j$ denotes the loss of the GNN on the $j$-th sample. Through the aforementioned approximation, we can easily calculate the gradient of the edited model on $D_r$ as follows:
\begin{small}
\begin{align}
\nabla _{\widehat{\omega }_r}L_{D_r}=&\small{\frac{1}{\left| D_r \right|}}\sum_{j\in D_r}{\nabla _{\widehat{\omega }_r}l_j} \\
\approx& \small{\frac{1}{\left| D_r \right|}}\left( \sum_{j\in D_r \setminus D_k}{\nabla _{\omega^{*}}l_j}+\sum_{j\in D_k}{\nabla _{\widehat{\omega }_r}l_j} \right) 
\end{align}
\end{small}
where $L_D$ denotes the loss function of the GNN on dataset $D$. Further, leveraging the gradient of the GNN on $D$, we obtain $\sum_{j\in D_r \setminus D_k}{\nabla _{\omega^{*}}l_j} = \left| D \right|\nabla _{\omega^{*}}L_D-\sum_{j\in D_f}{\nabla _{\omega^{*}}l_j}-\sum_{j\in D_k}{\nabla _{\omega^{*}}l_j}$. Finally, we estimate the gradient of the edited model on $D_r$ as follows:
\begin{small}
\begin{equation}
\begin{aligned}
    \nabla _{\widehat{\omega }_r}L_{D_r} =\small{\frac{1}{\left| D_r \right|}}( &\left| D \right|\nabla _{\omega^{*}}L_D-\left| D_f \right|\nabla _{\omega^{*}}L_{D_f} -\\
    &\left| D_k \right|\nabla _{\omega^{*}}L_{D_k}+\left| D_k \right|\nabla _{\widehat{\omega }_r}L_{D_k} )
    \label{gradient}
\end{aligned}
\end{equation}
\end{small}

Notably, $\nabla _{\omega^{*}}L_D$ can be stored during GNN training, while the sizes of $D_f$ and $D_k$ are typically small. 
Therefore, we can efficiently approximate $\nabla _{\widehat{\omega }_r}L_{D_r}$ through $(\ref{gradient})$. After obtaining the gradient $\nabla _{\widehat{\omega }_r}L_{D_r}$ , we rectify the parameters through \han{one-step} gradient descent, \han{defined} as follows:
\begin{small}
    \begin{equation}
    \omega ^{\prime}=\widehat{\omega }_r -\lambda \nabla _{\widehat{\omega }_r}L_{D_r}
    \label{update}
\end{equation}
\end{small}
where $\lambda$ is the hyperparameter. 
The pseudo-code for the ETR approach is presented in Algorithm~\ref{alg}.

For edge and feature unlearning, we can proceed in a similar manner. The pseudo-code for edge and feature unlearning is provided in the Appendix.

\subsection{Complexity Analysis}
In this section, we analyze the complexity of ETR. Calculating $\nabla_{\omega^{*}}L_{D_f}$ with a time complexity of $\mathcal{O}(|D_f|)$. Similarly, calculating $\nabla_{\omega^{*}}L_{D_k}$ and $\nabla_{\widehat{\omega}_{r}}L_{D_k}$ with a time complexity of $\mathcal{O}(|D_k|)$. 
Computing the FIM has a time complexity of $\mathcal{O}(|\omega|)$.
The Erase strategy can be executed in parallel, with a time complexity of $\mathcal{O}(|\omega|)$. Thus, the overall time complexity of ETR is $\mathcal{O}(|D_f| + |D_k| + |\omega|)$. Regarding space complexity, the GNN parameters have a space complexity of $\mathcal{O}(|\omega|)$. The node features have a space complexity of $\mathcal{O}(|D_f|d + |D_k|d)$. The edges have a space complexity of $\mathcal{O}(|\mathcal{E}_f| + |\mathcal{E}_k|)$, where $|\mathcal{E}_f|$ and $|\mathcal{E}_k|$ denote the number of edges in $D_f$ and $D_k$, respectively. 
Computing the FIM has a space complexity of $\mathcal{O}(|\omega|)$.
Therefore, the overall space complexity of ETR is $\mathcal{O}(|D_f|d + |D_k|d + |\omega| + |\mathcal{E}_f| + |\mathcal{E}_k|)$. Notably, $D_f$ and $D_k$ are much smaller than $D$, resulting in low computational overhead for large-scale graphs.

\section{Experiments}
In this section, we conduct extensive experiments to evaluate the effectiveness of ETR. The experiments aim to answer the following research questions: \textbf{RQ1}: How does ETR perform in terms of model utility, unlearning efficiency, and unlearning efficacy? 
\textbf{RQ2}: How does ETR perform on large-scale graphs? \textbf{RQ3}: How do hyperparameters influence ETR's performance? \textbf{RQ4}: How do different strategies in ETR contribute to its effectiveness?  

\subsection{Experimental Setup}
\subsubsection{Datasets} 
We conduct experiments on PubMed, CiteSeer, Cora~\cite{DBLP:conf/icml/YangCS16}, CS, Physics~\cite{DBLP:journals/corr/abs-1811-05868}, ogbn-arxiv, and ogbn-products~\cite{DBLP:conf/nips/HuFZDRLCL20}. The statistics and detailed descriptions of these datasets are provided in the Appendix.

\subsubsection{Baselines}
We compare ETR with various baselines, including Retrain, GraphEraser-BEKM, GraphEraser-BLPA~\cite{DBLP:conf/ccs/Chen000H022}, GIF~\cite{DBLP:conf/www/WuYQS0023}, GUIDE-SR, GUIDE-Fast~\cite{DBLP:conf/uss/WangH023}, GNNDelete~\cite{DBLP:conf/iclr/ChengDHAZ23}, and MEGU~\cite{DBLP:conf/aaai/LiZWZLW24}. Detailed descriptions are provided in the Appendix.

\subsubsection{Settings}
Following GIF, we partition each graph into a training subgraph comprising training nodes and a test subgraph containing the remaining nodes. For PubMed, CiteSeer, Cora, CS, and Physics, we randomly allocate 90\% of the nodes to the training set. For ogbn-arxiv and ogbn-products, we use the splits provided in~\cite{DBLP:conf/nips/HuFZDRLCL20}. The unlearning ratio is set to 5\% for all tasks. We use two-layer GCN and GAT models with a hidden state dimension of 256, training for 100 epochs. We tune the learning rates within the range of [1e-4, 1e-1] and the weight decay within [1e-7, 1e-2] for different datasets. All experiments are conducted on a single Nvidia A40 GPU. Each experiment is run ten times, and we report the average value and standard deviation.

\subsection{Evaluation of Model Utility (RQ1)}
We evaluate the F1 score for node classification on the remaining dataset. The results for node unlearning are shown in \tablename~\ref{tab:result}, while the results for edge and feature unlearning are provided in the Appendix. It can be observed that ETR consistently performs best in most cases, which can be attributed to the Rectify strategy that enhances model performance on the remaining dataset. Additionally, BEKM and BLPA exhibit suboptimal performance, consistent with the results reported in their original papers, likely due to disruptions in graph structure resulting from graph partitioning. GUIDE mitigates this problem, leading to improved performance. Among the baselines, MEGU achieves the best performance, which can be attributed to its training of the model’s predictive objective during unlearning.

\begin{table}[]
\centering
\setlength{\tabcolsep}{1mm}
\small
\begin{tabular}{cc|ccccc}
\hline
Bone             & Method  & PubMed        & CiteSeer      & Cora          & CS            & Physics       \\ \hline
\multirow{8}{*}{GCN} & Retrain & 1.57          & 1.52          & 1.30           & 6.13          & 14.62         \\
                     & BEKM    & 0.80           & 0.60           & 0.45          & 5.87          & 14.87         \\
                     & BLPA    & \textbf{0.45} & \textbf{0.48} & \textbf{0.38} & \textbf{1.44} & 2.89          \\
                     & GIF     & 1.70           & 1.00             & 0.61          & 11.18         & 37.63         \\
                     & SR      & 1.48          & 1.23          & 1.20           & 2.09          & 4.97          \\
                     & Fast    & 1.44          & 1.23          & 1.20           & 2.36          & 5.10           \\
                     & GNNDelete    & 1.64          & 0.87          & 1.22          & 2.76          & 5.25          \\
                     & MEGU    & 0.68          & 0.52          & 0.41          & 2.07          & 4.52          \\
                     & ETR     & 1.18          & 1.21          & 1.17          & 1.61          & \textbf{2.43} \\ \hline
\multirow{8}{*}{GAT} & Retrain & 1.77          & 1.52          & 1.30           & 6.17          & 15.28         \\
                     & BEKM    & 0.78          & 0.60           & 0.45          & 5.87          & 14.86         \\
                     & BLPA    & \textbf{0.48} & \textbf{0.48} & \textbf{0.40}  & 1.60           & 3.53          \\
                     & GIF     & 2.88          & 0.80           & 0.72          & 5.89          & 14.3          \\
                     & SR      & 1.52          & 1.23          & 1.20           & 2.20           & 5.21          \\
                     & Fast    & 1.52          & 1.23          & 1.20           & 2.41          & 5.32          \\
                     & GNNDelete    & 1.90          & 1.31          & 1.24          & 3.05          & 6.10          \\
                     & MEGU    & 0.94          & 0.57          & 0.45          & 3.62          & 7.64          \\
                     & ETR     & 1.20           & 1.21          & 1.17          & \textbf{1.54} & \textbf{2.30}  \\ \hline
\end{tabular}
\caption{The memory overhead (GB) of different methods.}
\label{tab:space}
\end{table}

\begin{table}[!t]
\centering
\small
\setlength{\tabcolsep}{1mm}
\begin{tabular}{cc|ccccc}
\hline
Backbone             & Method & PubMed          & CiteSeer         & Cora             & CS               & Physics      \\ \hline
\multirow{7}{*}{GCN} & BEKM   & 5e-2           & 2e-3           & 8e-3           & 2e-2            & 2e-2            \\
                     & BLPA  & 5e-2           & 2e-3           & 8e-3           & 2e-2            & 1e-2            \\
                     & GIF    & 7e-2           & 4e-3           & 2e-2            & 3e-2            & 1e-2            \\
                     & SR     & 9e-2           & 8e-3           & 2e-2            & 8e-3           & 2e-3             \\
                     & Fast   & 8e-2           & 9e-3           & 2e-2            & 8e-3           & 2e-3             \\
                     & MEGU   & \textbf{1e-2} & 3e-3           & 3e-2            & 1e-3           & 7e-4           \\
                     & ETR    & 3e-2           & \textbf{5e-4} & \textbf{9e-4} & \textbf{4e-4} & \textbf{2e-4}  \\ \hline
\multirow{7}{*}{GAT} & BEKM   & \textbf{9e-3}          & 1e-2            & 2e-2            & 9e-3           & 5e-4            \\
                     & BLPA  & \textbf{9e-3} & 1e-2            & 2e-2            & 8e-3           & 5e-4            \\
                     & GIF    & 6e-2           & 7e-3           & 2e-2            & 2e-3           & 3e-2            \\
                     & SR     & 7e-2           & 1e-2            & 1e-2            & 4e-3           & 7e-2             \\
                     & Fast   & 7e-2           & 1e-2            & 1e-2           & 4e-3           & 7e-2             \\
                     & MEGU   & 9e-2           & 4e-2            & 3e-3           & 2e-2            & 1e-2 \\
                     & ETR    & 3e-2           & \textbf{1e-3}  & \textbf{2e-3}  & \textbf{4e-4} & \textbf{2e-4}           \\ \hline
\end{tabular}
\caption{The root mean square distance between the parameters and those of Retrain.}
\label{tab:diff}
\end{table}

\subsection{Evaluation of Unlearning Efficiency (RQ1)}
The runtime results are presented in \tablename~\ref{tab:result}. ETR consistently outperforms all baselines, which can be attributed to its training-free nature. ETR reduces runtime by thousands of times compared to Retrain, demonstrating its superior efficiency. While GraphEraser and GUIDE are also faster than Retrain in most cases, they still require considerable time for unlearning due to the need for retraining multiple models. Although GIF and MEGU improve runtime efficiency compared to other baselines, they still fall short of ETR. Specifically, ETR reduces runtime by 38 times compared to GIF and 12 times compared to MEGU.

The memory overhead results are shown in \tablename~\ref{tab:space}. ETR performs comparably to the baselines on small datasets and outperforms them on larger datasets (Photo and Computer). This is because memory overhead on small datasets is primarily driven by model parameters, limiting the advantages of ETR. However, ETR reduces memory overhead on large datasets by not requiring the entire training dataset, decreasing it by 3.4 times compared to Retrain. Among the baselines, BLPA performs best due to its use of graph partitioning, which reduces memory overhead but significantly degrades model utility. Overall, ETR achieves an excellent trade-off between unlearning efficiency and model utility.

\begin{figure}[!t]
    \centering
    \includegraphics[width=0.8\linewidth]{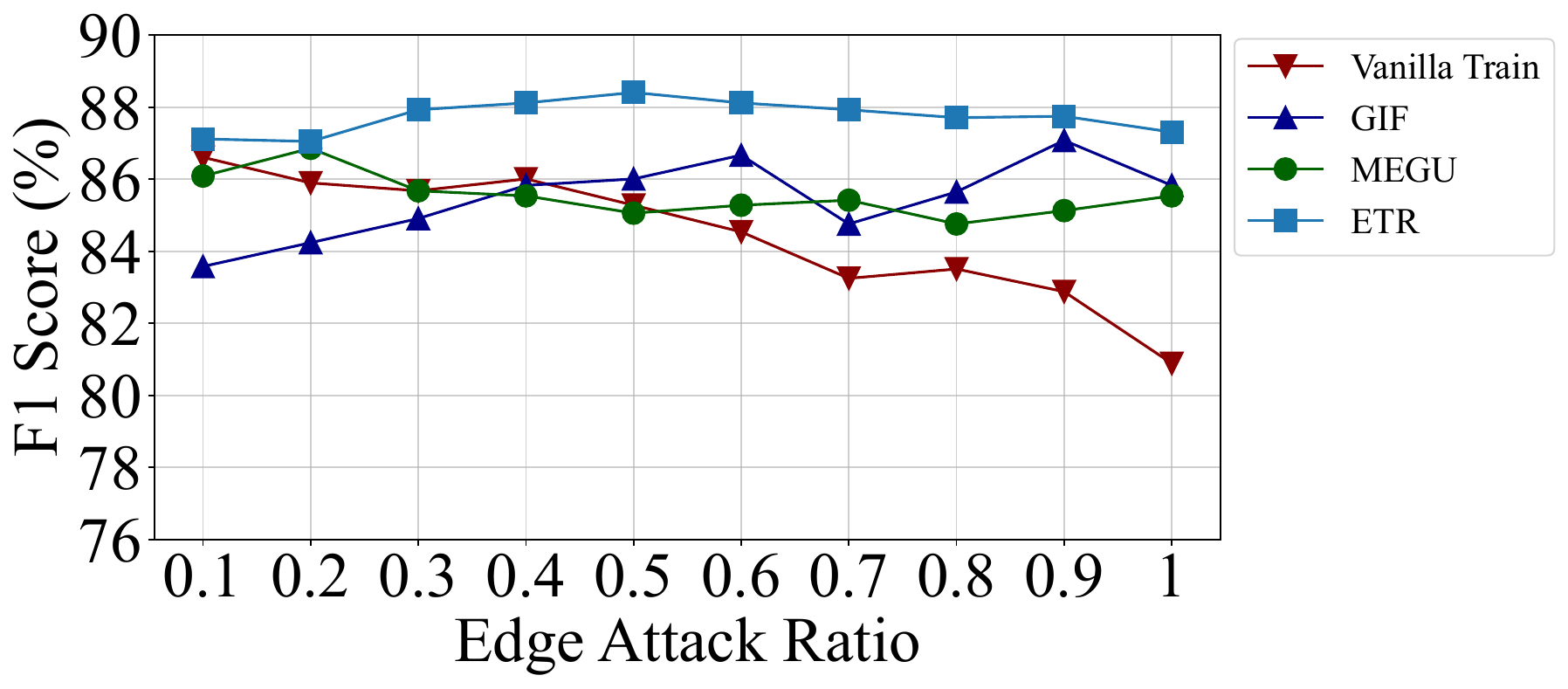}
    \caption{Adversarial edge unlearning on the Cora dataset.}
    \label{fig:attack}
\end{figure}

\begin{table}[]
\centering
\small
\setlength{\tabcolsep}{0.8mm}
\begin{tabular}{cc|cccccccc}
\hline
\multirow{2}{*}{Bone} & \multirow{2}{*}{Method} & \multicolumn{4}{c}{ogbn-arxiv}                                         & \multicolumn{4}{c}{ogbn-products}                                     \\ \cmidrule(l){3-6} \cmidrule(l){7-10}
                          &                         & F1                  & T              & S             & D            & F1                  & T             & S             & D            \\ \hline
\multirow{6}{*}{GCN}      & Retrain                 & 67.5          & 152         & 3.7          & -               & 73.6          & 519        & 11.2         & -               \\
                          & BEKM                    & 64.8          & 1290        & 2.9          & 8e-1            & OM                 & OM           & OM           & OM             \\
                          & BLPA                    & 63.9          & 1323        & \textbf{0.9} & 8e-1            & OM                 & OM           & OM           & OM             \\
                          & GIF                     & 62.1          & 7           & 19.7         & 1            & OM                 & OM           & OM           & OM             \\
                          & GNNDelete                    & 51.7          & 34           & 8.2          & -           & OM          & OM          & OM         & OM           \\
                          & MEGU                    & 65.6          & 2           & 5.9          & 4e-2           & 71.3          & 7          & 26.8         & 6e-2           \\
                          & ETR                     & \textbf{66.4} & \textbf{0.08} & 1.2          & \textbf{2e-2}  & \textbf{73.8} & \textbf{0.5} & \textbf{1.2} & \textbf{9e-3} \\ \hline
\multirow{6}{*}{GAT}      & Retrain                 & 67.3 & 217.9         & 6          & -               & 73.9          & 709        & 11.2         & -               \\
                          & BEKM                    & 65.3          & 1804        & 3.0          & 6e-1            & OM                 & OM           & OM           & OM             \\
                          & BLPA                    & 64.7          & 1658        & 2.5          & 6e-1            & OM                 & OM           & OM           & OM             \\
                          & GIF                     & 66.1          & 12          & 34.5         & 5e-1            & OM                 & OM           & OM           & OM             \\
                          & GNNDelete                    & 49.6          & 44           & 10.1          & -           & OM          & OM          & OM         & OM           \\
                          & MEGU                    & 58.5          & 3           & 11.2         & \textbf{4e-3} & 63.7          & 8          & 26.9         & 5e-2           \\
                          & ETR                     & \textbf{67.8} & \textbf{0.1}  & \textbf{1}  & 2e-2           & \textbf{73.9} & \textbf{0.7} & \textbf{1.3} & \textbf{2e-2}  \\ \hline
\end{tabular}
\caption{Performance comparison on large-scale graphs, where S denotes memory overhead, D indicates parameter distance, and ``OM" represents ``out of memory". \eat{\han{what are the meanings of 'S' and 'D' in the table?}}}
\label{tab:large}
\end{table}

\subsection{Evaluation of Unlearning Efficacy (RQ1)}
We assess unlearning efficacy using both direct and indirect evaluation methods. For direct evaluation, we measure the root mean square distance between the parameters of the unlearned model and those of the retrained model. As shown in \tablename~\ref{tab:diff}, ETR consistently performs best in most cases, indicating that its parameters are closest to those obtained by retraining from scratch, thereby demonstrating ETR's effectiveness in achieving graph unlearning.

For indirect evaluation, we follow the approach used by GIF and MEGU, adding adversarial edges between nodes of different categories in the training graph. These edges serve as unlearning targets, and we then assess the utility of the unlearned model. As shown in \figurename~\ref{fig:attack}, the performance of the vanilla trained model declines with increasing attack ratios, while GIF, MEGU, and ETR maintain or even improve performance. This demonstrates the effectiveness of these methods. Notably, ETR consistently outperforms the other baselines, highlighting its strong unlearning efficacy.

\subsection{Evaluation on Large Scale Graphs (RQ2)}
We conduct experiments on the ogbn-arxiv and ogbn-products datasets to assess ETR's effectiveness and efficiency on large-scale graphs, with the results shown in \tablename~\ref{tab:large}. GUIDE encounters out-of-memory issues on both datasets, while GraphEraser and GIF also experience out-of-memory issues on the ogbn-products dataset. ETR demonstrates superior performance in terms of model utility, unlearning efficiency, and unlearning efficacy on large-scale graphs. Specifically, ETR shows an average utility improvement of 9.13\% over MEGU, reduces runtime by thousands of times compared to Retrain, and decreases memory overhead by 7 times compared to Retrain and 15 times compared to MEGU. In terms of unlearning efficacy, ETR performs best in most cases. Overall, ETR achieves better performance and efficiency compared to other methods.

\begin{figure}[!t]
    \centering
    \includegraphics[width=\linewidth]{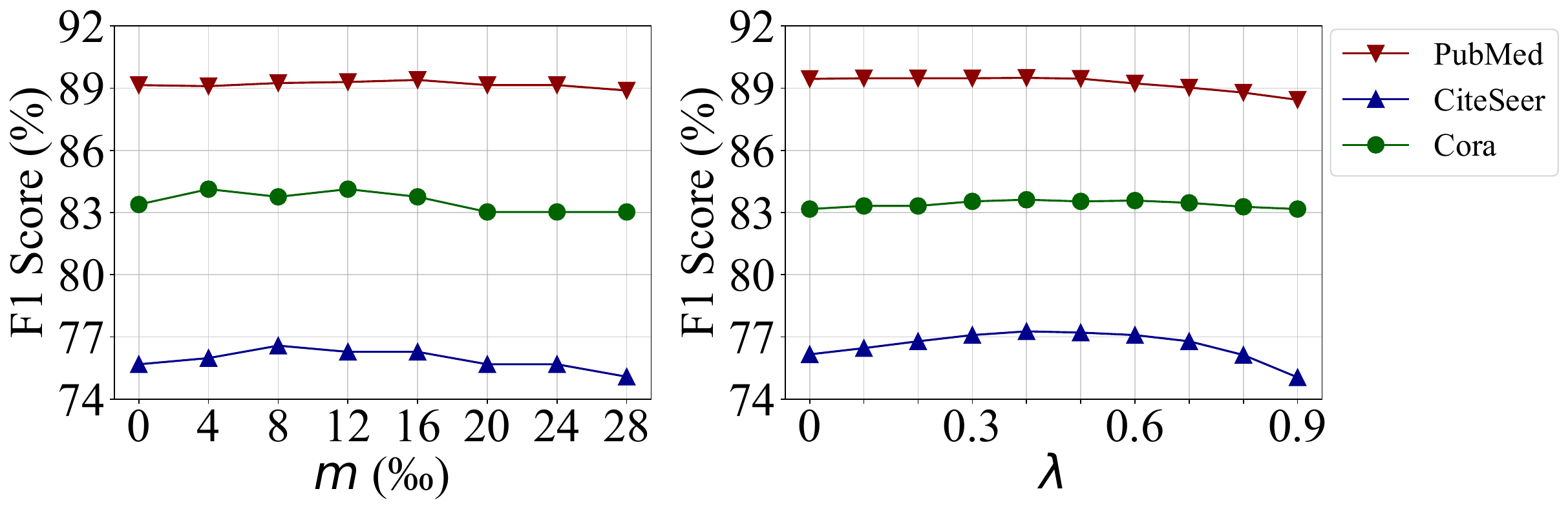}
    \caption{Performance with varying hyperparameters.}
    \label{fig:para}
\end{figure}

\subsection{Hyperparameter Analysis (RQ3)}
The hyperparameters $m$ and $\lambda$ are crucial for ETR’s performance. The value of $m$ determines how much the parameters are modified, thereby influencing the extent of model forgetting. The value of $\lambda$ controls the extent of model rectification. In this section, we investigate their impact on ETR's performance using the PubMed, CiteSeer, and Cora datasets.

We investigate the effect of different $m$ values on the Erase strategy, with results shown in \figurename~\ref{fig:para}. It can be observed that increasing $m$ generally improves model performance, but performance declines if $m$ becomes too large. This is because larger $m$ values help the model forget unlearned samples effectively, while excessively large values can also lead to the loss of essential knowledge. Typically, $m$ values between 8‰ and 16‰ provide the best results for the Erase strategy. Notably, the model performs better compared to $m=0$, confirming that the Erase strategy effectively forgets unlearned samples and supports Theorem~\ref{theorem}.

We tune $\lambda$ within the range of [0, 0.9] to investigate how it affects ETR's performance, with results shown in \figurename~\ref{fig:para}. It can be observed that ETR's performance generally improves as $\lambda$ increases. However, performance declines if $\lambda$ becomes too large. This is because larger $\lambda$ values aid in model rectification, while excessively large values can also lead to excessive rectification. Optimal performance for ETR is typically achieved with $\lambda$ values between 0.3 and 0.5.

\begin{table}[!t]
\centering
\small
\setlength{\tabcolsep}{1.5mm}
\begin{tabular}{c|ccccc}
\hline
Method  & PubMed         & CiteSeer       & Cora           & CS             & Physics     \\ \hline
w/ mask & 88.91          & 77.00          & 82.51          & 92.44          & \textbf{95.80}          \\
w/o mes & 89.30          & 76.91          & 82.44          & 92.53          & 95.70          \\
ETR     & \textbf{89.48} & \textbf{77.09} & \textbf{83.54} & \textbf{92.72} & 95.71 \\ \hline
\end{tabular}
\caption{Performance comparison between model variants.}
\label{tab:ablation}
\end{table}

\begin{table}[!t]
\centering
\small
\setlength{\tabcolsep}{1.5mm}
\begin{tabular}{c|cccccc}
\hline
Metric & PubMed   & CiteSeer & Cora     & CS       & Physics \\ \hline
AD     & 5.7E-06  & 8.1E-07  & 1.7E-06  & 5.3E-06  & 1.6E-04   \\
RD     & 0.23     & 0.09     & 0.07     & 0.36     & 0.22       \\ \hline
\end{tabular}
\caption{The difference between the approximate gradient and the true gradient.}
\label{tab:appro}
\end{table}

\subsection{Ablation Studies (RQ4)}
This section investigates the effectiveness of different strategies in ETR. The Erase strategy is crucial for model forgetting, while Rectify is critical for maintaining model utility. Results in \figurename~\ref{fig:para} indicate that increasing $m$ improves performance compared to $m=0$, demonstrating the effectiveness of the Erase strategy in forgetting unlearned samples. Similarly, increasing $\lambda$ improves performance compared to $\lambda=0$, indicating that the Rectify strategy effectively enhances the model performance on the remaining dataset.

To further investigate the effectiveness of the Erase strategy, we compare ETR with two model variants, as shown in \tablename~\ref{tab:ablation}. Specifically, ``w/ mask" refers to using the masking strategy, while ``w/o mes" indicates not considering the impact of message propagation. The results show that ETR outperforms both variants in most cases, except for a slight decrease on the Physics dataset compared to ``w/ mask". This demonstrates that parameter editing is more effective than just applying a mask and emphasizes the importance of considering message propagation in graph unlearning.

In the Rectify stage, we use the induced graph of unlearned samples to approximate the gradient of the model on the remaining dataset. We measure the difference of this approximation using Absolute Difference (AD) and Relative Difference (RD), as shown in \tablename~\ref{tab:appro}. The results show that both differences are minimal, indicating that the gradient approximation ensures both efficiency and accuracy.

\section{Conclusion}
In this paper, we investigate cost-effective graph unlearning and proposes the ETR method, which removes the influence of unlearned samples while preserving model performance on the remaining data. In the Erase stage, we use the Fisher Information Matrix to identify parameters crucial for unlearned samples and their impact on connected samples, then modify them to forget both the unlearned samples and their influence. In the Rectify stage, we leverage the induced graph of unlearned samples to approximate the model's gradient on the remaining data, which is then used to enhance model performance. Extensive experiments demonstrate that ETR achieves strong performance in both efficiency and effectiveness. Future work will explore unlearning partial attributes in multi-attribute samples and partial objectives in multi-task settings~\cite{DBLP:conf/nips/LiuT15, DBLP:journals/jmlr/LiuTM17, DBLP:journals/pami/LiuXTZ19}, where strong dependencies between different attributes and different tasks pose challenges, especially in graph-based scenarios, where there are also strong dependencies between samples. 

\section*{Acknowledgements}
This work was supported by the National Natural Science Foundation of China (Grant No. 62276277, 92370204), the Guangdong Basic and Applied Basic Research Foundation (Grant No. 2022B1515120059), the National Key R\&D Program of China (Grant No. 2023YFF0725004), the Guangzhou-HKUST(GZ) Joint Funding Program (Grant No. 2023A03J0008), and the Education Bureau of Guangzhou Municipality.

\bibliography{reference}

\clearpage

\appendix
\section{Theoretical Analysis}
Prior to proving Theorem~\ref{theorem}, we first introduce some notation that will be used in the proof. Considering the architectural differences among various GNNs, for the sake of simplicity in our analysis, we consider the following general two-layer GNN architecture:
\begin{align}
    H_1&=\sigma \left( PXW_0 \right) \\
    H_2&=PH_1W_1 \\
    Z&=softmax \left( H_2 \right) 
\end{align}
where $P$ represents the message transformation matrix associated with the adjacency matrix, $\sigma(\cdot)$ denotes the activation function, $W_0$ and $W_1$ represent parameter matrices. In the node classification task, the loss function is given by:
\begin{align}
    L=-\small{\frac{1}{\left| D \right|}}\sum_{i\in D}{\sum_{j=1}^C{Y_{i,j}\log \left( Z_{i,j} \right)}}.
\end{align}

Based on the definition of the Fisher Information Matrix, the following properties are established:
\begin{small}
    \begin{align}
    F_D&=\frac{1}{\left| D \right|}\sum_{i=1}^{\left| D \right|}{\nabla _w\log p\left( y_i|x_i,w \right) \nabla _w\log p\left( y_i|x_i,w \right) ^T} \\
    &=\frac{1}{\left| D \right|}\sum_{i\in D_f}{\nabla _w\log p\left( y_i|x_i,w \right) \nabla _w\log p\left( y_i|x_i,w \right) ^T} \\
    &+\frac{1}{\left| D \right|}\sum_{i\in D_r}{\nabla _w\log p\left( y_j|x_j,w \right) \nabla _w\log p\left( y_j|x_j,w \right) ^T} \\
    &=\frac{\left| D_f \right|}{\left| D \right|}F_{D_f}+\frac{\left| D_r \right|}{\left| D \right|}F_{D_r}
\end{align}
\end{small}

We denote the vectorization of the matrix $W$ as $vec(W)$ and define $\omega = vec(W_0, W_1) = concat(vec(W_0), vec(W_1))$, where $concat(\cdot, \cdot)$ denotes the concatenation of two vectors. We now present the proof of Theorem~\ref{theorem}.

\begin{proof}[Proof of Theorem~\ref{theorem}]
First, we compute the gradients of the backpropagation using the node classification loss function as follows:
\begin{small}
    \begin{align}
    \frac{\partial L}{\partial W_1}&=H_{1}^{T}P^T\left( Z-Y \right) \\
    \frac{\partial L}{\partial W_0}&=X^TP^T\left\{ \left[ \left( Z-Y \right) W_{1}^{T}P^T \right] \odot \sigma' \left( PXW_0 \right) \right\} 
\end{align}
\end{small}
According to the definition of the Fisher Information Matrix, we have: $F=vec\left( \frac{\partial L}{\partial W_1},\frac{\partial L}{\partial W_0} \right) vec\left( \frac{\partial L}{\partial W_1},\frac{\partial L}{\partial W_0} \right) ^T$. For the sake of simplicity, we decompose $F$ by analyzing $W_0$ and $W_1$ separately. Define $F_0=vec\left( \frac{\partial L}{\partial W_0} \right) vec\left( \frac{\partial L}{\partial W_0} \right) ^T$ and $F_1=vec\left( \frac{\partial L}{\partial W_1} \right) vec\left( \frac{\partial L}{\partial W_1} \right) ^T$. Accordingly, we have: $F_1vec\left( W_1 \right) =tr\left( H_2\left( Z-Y \right) \right) vec\left( \frac{\partial L}{\partial W_1} \right) $, and $F_0vec\left( W_0 \right) =tr\left( W_0\frac{\partial L}{\partial W_0} \right) vec\left( \frac{\partial L}{\partial W_0} \right) $. We denote $b\coloneqq F\omega \approx \left[ F_0W_0;F_1W_1 \right] $

According to Formulation (\ref{mask}), we have: 
\begin{align}
    &\frac{1}{\left| \omega \right|}\sum_j{\left( \omega _{r,j}^{*}-\,\,\widehat{\omega}_{r,j} \right)}^2 \\
    =&\frac{1}{\left| \omega \right|}\left( \sum_{j\in M}{{\omega _{r,j}^{*}}^2}+\sum_{j\notin M}{\left( \omega _{r,j}^{*}-\,\,\omega _{j}^{*} \right) ^2} \right) 
    \label{2part}
\end{align}
Using $diag(F)$ to approximate the Fisher information matrix $F$, the first term of Formulation (\ref{2part}) is given by:
\begin{align}
    \sum_{j\in M}{{\omega _{r,j}^{*}}^2}=\sum_{j\in M}{\frac{b_{r,j}^{2}}{F_{D_r,jj}^{2}}}\leqslant c_1\sum_{j\in M}{\frac{1}{F_{D_r,jj}^{2}}}
\end{align}
where $c_1=\underset{j}{\max}\ b_{r,j}^{2}$. And the second term of Formulation (24) is given by:
\begin{small}
    \begin{align}
    &\sum_{j\notin M}{\left( \omega _{r,j}^{*}-\,\,\omega _{j}^{*} \right) ^2}\\
    =&\sum_{j\notin M}{\left( \frac{b_j}{F_{D,jj}}-\,\,\frac{b_{r,j}}{F_{D_r,jj}} \right) ^2} \\
    =&\sum_{j\notin M}{\left( \frac{b_jF_{D_r,jj}-b_{r,j}F_{D,jj}}{F_{D,jj}F_{D_r,jj}} \right) ^2} \\
    =&\sum_{j\notin M}{\left( \frac{b_jF_{D_r,jj}-b_{r,j}\frac{\left| D_f \right|}{\left| D \right|}F_{D_f,jj}-b_{r,j}\frac{\left| D_r \right|}{\left| D \right|}F_{D_r,jj}}{F_{D,jj}F_{D_r,jj}} \right) ^2} \\
    =&\sum_{j\notin M}{\left( \left( b_j-b_{r,j}\frac{\left| D_r \right|}{\left| D \right|} \right) \frac{1}{F_{D,jj}}-b_{r,j}\frac{\left| D_f \right|}{\left| D \right|}\frac{F_{D_f,jj}}{F_{D,jj}F_{D_r,jj}} \right) ^2} \\
    \leqslant &\sum_{j\notin M}{\left( b_j-b_{r,j}\frac{\left| D_r \right|}{\left| D \right|} \right) ^2}\frac{2}{F_{D,jj}^{2}} \\
    +&\sum_{j\notin M}{\left( b_{r,j}\frac{\left| D_f \right|}{\left| D \right|} \right) ^2}\frac{F_{D_f,jj}^{2}}{F_{D,jj}^{2}F_{D_r,jj}^{2}}\\
    \leqslant& c_2\sum_{j\notin M}{\frac{F_{D_f,jj}^{2}}{F_{j,j}^{2}F_{r,jj}^{2}}}+c_3
    \end{align}
\end{small}
where $c_2=\underset{j}{\max}\left( b_{r,j}\frac{\left| D_f \right|}{\left| D \right|} \right) ^2$, and $c_3=\sum_j{\left( b_j-b_{r,j}\frac{\left| D_r \right|}{\left| D \right|} \right) ^2}\frac{2}{F_{D,jj}^{2}}$.

Therefore, we can obtain:
\begin{small}
    \begin{align}
    &\frac{1}{\left| \omega \right|}\sum_j{\left( \omega _{r,j}^{*}-\,\,\widehat{\omega}_{r,j} \right)}^2 \\
    =&\frac{1}{\left| \omega \right|}\left( \sum_{j\in M}{{\omega _{r,j}^{*}}^2}+\sum_{j\notin M}{\left( \omega _{r,j}^{*}-\,\,\omega _{j}^{*} \right) ^2} \right) \\
    \leqslant& \frac{1}{\left| \omega \right|}\left( c_1\sum_{j\in M}{\frac{1}{F_{D_r,jj}^{2}}}+c_2\sum_{j\notin M}{\frac{F_{D_f,jj}^{2}}{F_{D,jj}^{2}F_{D_r,jj}^{2}}}+c_3 \right) 
\end{align}
\end{small}
\end{proof}

\section{Datasets}
\label{appendix:dataset}
\subsection{Dataset Statistics}

\begin{table}[h]
\resizebox{\linewidth}{!}{
\begin{tabular}{@{}c|cccc@{}}
\toprule
Dataset  & \#Nodes & \#Edges & \#Features & \#Classes \\ \midrule
PubMed   & 19,717   & 88,651   & 500        & 3         \\
CiteSeer & 3,327    & 9,228    & 3,703      & 6         \\
Cora     & 2,708    & 10,556   & 1,433      & 7         \\
CS       & 18,333   & 163,788  & 6,805      & 15         \\
Physics  & 34,493   & 495,924  & 8,415      & 5        \\
ogbn-arxiv   & 169,343 & 1,166,243  & 128   & 40        \\ 
ogbn-products   & 2,449,029 & 61,859,140  & 100   & 47        \\ \bottomrule
\end{tabular}}
\caption{Statistics of datasets.}
\label{tab:stat}
\end{table}

\subsection{Dataset Descriptions}
\begin{itemize}
  \item \textbf{PubMed, CiteSeer, and Cora}. PubMed, CiteSeer, and Cora are three standard citation network benchmark datasets. In these datasets, the nodes correspond to research papers, and the edges represent the citations between papers. The node features are derived from the bag-of-words representation of the papers, while the node labels indicate the academic topics of the papers.
  \item \textbf{CS and Physics}. CS and Physics are two co-authorship networks focused on research papers. In these datasets, nodes represent authors of research papers, while edges denote the co-authorship relationships among them. Node features derive from the keywords associated with each author's publications, and node labels indicate the most active fields of study for each author.
  \item \textbf{ogbn-arxiv}. ogbn-arxiv is a large-scale citation network comprising Computer Science papers from arXiv. In this dataset, nodes represent individual arXiv papers, while edges signify the citation relationships between them. Node features are generated by averaging the embeddings of the words found in the title and abstract of each paper. Node labels denote the subject areas of the papers.
  \item \textbf{ogbn-products}. ogbn-products is a large-scale co-purchasing network of Amazon products. In this dataset, nodes represent individual products sold on Amazon, while edges indicate pairs of products that were purchased together. Node features are derived by extracting bag-of-words features from product descriptions. Node labels denote the product categories.
\end{itemize}

\section{Baselines}
\label{appendix:baseline}
\begin{itemize}
    \item \textbf{Retrain}: Retrain is an exact graph unlearning method that entails retraining the model from scratch on the remaining dataset upon receiving an unlearning request.
    \item \textbf{GraphEraser}: GraphEraser is the first approximate graph unlearning method, adhering to the SISA paradigm. It partitions the training graph into multiple shards, trains a model on each shard, and subsequently aggregates the various submodels. Upon receiving an unlearning request, only the affected submodels need to be retrained. GraphEraser-BEKM and GraphEraser-BLPA denote the application of different graph partitioning methods.
    \item \textbf{GIF}: GIF utilizes influence functions to approximate the parameter changes resulting from an unlearning request. It subsequently updates the model parameters based on these estimated changes to facilitate graph unlearning.
    \item \textbf{GUIDE}: GUIDE is specifically designed for inductive graph unlearning and follows the SISA paradigm. GUIDE takes into account fairness and balance in graph partitioning and addresses the disruptions caused to the graph structure. GUIDE-Fast and GUIDE-SR represent the application of different graph partitioning methods.
    \item \textbf{GNNDelete}: GNNDelete freezes the model parameters and introduces a learnable deletion operator to ensure that the influence of deleted elements is removed while preserving the remaining knowledge of the model.
    \item \textbf{MEGU}: MEGU integrates the model's predictive capability and forgetting ability within a unified optimization framework, thereby facilitating mutual benefits between these two objectives.
\end{itemize}

\begin{algorithm}[!t]
\caption{ETR for Edge and Feature Unlearning}
\label{alg2}
\begin{algorithmic}[1]
\Require  
    Influenced dataset $D_i$; GNN $f_\mathcal{G}$; Optimal parameters $\omega^{*}$; Gradient $\nabla _{\omega^{*}}L_D$; Hyperparameters $m$ and $\lambda$.
\Statex \textit{\textbf{Erase}}
    \State Calculate the gradients $\nabla _{\omega^{*}}L_{D_i}$.
    \State Calculate the FIM $F_{D}$, $F_{D_i}$ via $(\ref{FIM})$.
    \State Obtain $\gamma=top\mbox{-}k\%(\frac{F_{D_i}}{F_{D}})$.
    \For{$j \ in\  range \ |\omega|$}
        \State Obtain $\alpha _j=\frac{F_{D,jj}}{F_{D_i,jj}}$
        \If{$F_{D_i,jj} > \gamma F_{D,jj}$}
            \State $\widehat{\omega}_{r,j}=\alpha _j\gamma \omega _j^*$
        \Else
            \State $\widehat{\omega}_{r,j}=\omega _j^*$
        \EndIf
    \EndFor
    
\Statex \textit{\textbf{Rectify}}
    \State Calculate the gradient $\nabla _{\widehat{\omega}_{r}}L_{D_k}$.
    \State Obtain the gradient $\nabla _{\widehat{\omega}_{r}}L_{D_r}$ via $(\ref{gradient2})$.
    \State Obtain the parameters $\omega ^{\prime}$ via $(\ref{update})$.
\Ensure 
    $\omega ^{\prime}$.
\end{algorithmic}
\end{algorithm}

\begin{table*}[!t]
\centering
\begin{tabular}{cc|ccccc}
\hline
Backbone                   & Method  & PubMed     & CiteSeer   & Cora       & CS       & Physics         \\ \hline
\multirow{8}{*}{GraphSage} & Retrain & 88.94±0.31 & 74.02±0.43 & 77.64±1.56 & 93.09±0.26 & 96.54±0.11 \\
                           & BEKM    & 85.86±0.22 & 73.63±0.62 & 80.70±0.87 & 90.49±0.28 & 95.34±0.11 \\
                           & BLPA    & 85.15±0.26 & 73.45±0.78 & 72.95±0.65 & 90.05±0.27 & 94.71±0.16 \\
                           & GIF     & 86.96±0.38 & 75.91±0.77 & \underline{81.59±0.73} & 92.88±1.76 & 96.01±0.15 \\
                           & SR      & 86.58±0.28 & 74.04±1.50 & 78.41±1.12 & 91.31±0.44 & 94.88±0.14 \\
                           & Fast    & 86.70±0.32 & 74.97±0.78 & 78.60±1.11 & 91.18±0.52 & 94.93±0.07 \\
                           & MEGU    & \textbf{87.34±0.27} & \underline{75.67±0.94} & 80.22±0.50 & \underline{93.35±0.13} & \textbf{96.64±0.21} \\
                           & ETR     & \underline{87.22±0.26} & \textbf{76.64±0.69} & \textbf{83.10±0.95} & \textbf{93.86±0.18} & \underline{96.27±0.11} \\ \hline
\multirow{8}{*}{GIN}       & Retrain & 85.23±0.33 & 74.53±0.80 & 80.85±1.02 & 91.15±0.37 & 95.42±0.34 \\
                           & BEKM    & 83.56±0.37 & 68.89±3.65 & 78.89±2.42 & 87.11±0.31 & 93.83±0.25 \\
                           & BLPA    & 83.49±0.46 & 67.17±3.33 & 73.8±3.06  & 87.87±0.38 & 93.34±0.14 \\
                           & GIF     & 84.91±1.16 & 70.99±1.84 & 78.19±1.15 & 88.47±0.68 & \underline{94.83±0.23} \\
                           & SR      & 84.31±0.36 & 68.53±2.23 & 79.82±1.78 & 89.46±0.46 & 92.40±0.51 \\
                           & Fast    & 84.10±0.49 & 67.06±2.16 & 80.07±1.34 & 89.16±0.51 & 93.65±0.29 \\
                           & MEGU    & \underline{84.93±0.64} & \underline{74.68±1.03} & \textbf{81.66±0.80} & \underline{89.47±0.46} & 93.91±0.30 \\
                           & ETR     & \textbf{85.07±0.34} & \textbf{75.47±0.65} & \underline{80.92±1.07} & \textbf{90.20±0.46}          & \textbf{95.02±0.20} \\ \hline
\end{tabular}
\caption{Performance comparison in terms of model utility with GraphSage and GIN backbones.}
\label{tab:sage_gin}
\end{table*}

\begin{table*}[!t]
\centering
\begin{tabular}{cc|cccccc}
\hline
Backbone             & Method  & PubMed              & CiteSeer            & Cora                & CS                  & Physics             & ogbn-arxiv          \\ \hline
\multirow{8}{*}{GCN} & Retrain & 89.95±0.18          & 78.17±0.84          & 86.64±0.54          & 93.48±0.23          & 96.45±0.18          & 68.07±1.38          \\
                     & BEKM    & 78.90±0.48          & 59.85±1.70          & 65.68±3.01          & 87.62±0.64          & 94.66±0.37          & 64.90±0.25          \\
                     & BLPA    & 80.52±0.44          & 56.52±2.86          & 60.66±2.90          & 87.48±0.74          & 94.46±0.34          & 63.60±0.56          \\
                     & GIF     & 83.20±0.22          & 72.55±1.06          & 82.62±1.38          & 92.62±0.36          & 95.61±0.16          & 65.76±1.06          \\
                     & SR      & 87.36±0.28          & 78.26±0.79          & 84.46±0.93          & 92.14±0.14          & 94.91±0.12          & OOM                 \\
                     & Fast    & 87.36±0.24          & \underline{78.44±0.87}          & 84.65±0.89          & 92.14±0.17          & 94.95±0.12          & OOM                 \\
                     & MEGU    & \underline{87.81±0.23}          & 77.27±0.81          & \underline{86.53±0.80}          & \underline{93.23±0.40}          & \underline{96.01±0.11}          & \underline{66.97±1.67}          \\
                     & ETR     & \textbf{89.88±0.21} & \textbf{79.28±0.76} & \textbf{87.23±0.76} & \textbf{93.48±0.26} & \textbf{96.43±0.16} & \textbf{68.10±0.63} \\ \hline
\multirow{8}{*}{GAT} & Retrain & 88.60±0.26          & 77.66±0.91          & 84.13±1.44          & 93.08±0.17          & 96.65±0.11          & 69.03±0.60          \\
                     & BEKM    & 70.01±0.91          & 53.48±2.34          & 48.93±2.70          & 82.19±0.47          & 91.09±0.44          & 64.79±0.22          \\
                     & BLPA    & 71.26±1.17          & 51.59±2.86          & 49.89±6.71          & 81.80±0.62          & 90.83±0.27          & 64.37±0.26          \\
                     & GIF     & \underline{88.04±0.33}          & \underline{76.91±1.35}          & \underline{85.06±1.18}          & \textbf{93.59±0.22} & \underline{96.26±0.19}          & \underline{67.49±1.46}          \\
                     & SR      & 85.74±0.49          & 76.25±0.85          & 79.82±1.54          & 90.51±0.68          & 94.46±0.98          & OOM                 \\
                     & Fast    & 85.81±0.64          & 75.65±0.86          & 79.45±2.01          & 90.60±0.76          & 93.81±1.31          & OOM                 \\
                     & MEGU    & 84.24±0.40          & 75.44±0.88          & \textbf{86.61±1.11} & 92.32±0.31          & 95.09±0.19          & 59.69±1.69          \\
                     & ETR     & \textbf{88.21±0.48} & \textbf{77.87±0.93} & 84.32±0.91          & \underline{93.40±0.27}          & \textbf{96.61±0.17} & \textbf{69.40±0.46} \\ \hline
\end{tabular}
\caption{Performance comparison in terms of model utility in the edge unlearning task.}
\label{tab:edge_1}
\end{table*}

\section{ETR for Edge Unlearning and Feature Unlearning}
For edge and feature unlearning, it is necessary to forget the impact of the unlearned edges or features on other nodes. We denote the subgraph affected by the unlearned edges or features as $D_i$. Consequently, we modify the parameters that are crucial for $D_i$ but not for the remaining dataset, as detailed in Algorithm~\ref{alg2}. Additionally, we approximate the gradient of the model on the remaining dataset as follows: 
\begin{align}
    \nabla _{\widehat{\omega }_r}L_{D_r} =\small{\frac{1}{\left| D_r \right|}}( &\left| D \right|\nabla _{\omega^{*}}L_D-\left| D_i \right|\nabla _{\omega^{*}}L_{D_i}+\\
    &\left| D_i \right|\nabla _{\widehat{\omega }_r}L_{D_i} )
    \label{gradient2}
\end{align}
Furthermore, we utilize this gradient to enhance the model performance on the remaining dataset, as demonstrated in Algorithm~\ref{alg2}.

\section{Additional Experimental Results}

To investigate ETR's performance with different backbones, we conduct experiments using the GraphSage and GIN architectures, with results shown in Table~\ref{tab:sage_gin}. The results demonstrate that ETR outperforms other baselines. We also evaluate ETR's performance on edge unlearning and feature unlearning tasks. The model's utility, unlearning efficiency, and unlearning efficacy for edge unlearning are presented in Tables~\ref{tab:edge_1},\ref{tab:edge_2}, and\ref{tab:edge_3}, respectively. Additionally, performance on feature unlearning tasks is detailed in Tables~\ref{tab:feature_1},\ref{tab:feature_2}, and\ref{tab:feature_3}.

\begin{table*}[!t]
\centering
\begin{tabular}{cc|cccccc}
\hline
Backbone             & Method  & PubMed              & CiteSeer            & Cora                & CS                  & Physics             & ogbn-arxiv          \\ \hline
\multirow{8}{*}{GCN} & Retrain & 89.96±0.26          & 78.35±0.62          & 87.16±0.42          & 93.59±0.13          & 96.55±0.11          & 68.59±1.27          \\
                     & BEKM    & 79.94±0.50          & 58.26±3.50          & 60.89±6.51          & 89.22±0.52          & 94.67±0.17          & 65.01±0.24          \\
                     & BLPA    & 81.60±0.59          & 54.17±3.23          & 61.33±2.85          & 88.79±0.33          & 94.58±0.24          & 63.87±0.33          \\
                     & GIF     & 84.60±0.26          & 73.54±0.48          & 84.24±1.00          & 93.06±0.27          & 95.63±0.13          & 65.52±1.30          \\
                     & SR      & 87.44±0.24          & 77.03±0.50          & 78.19±4.21          & 91.19±0.36          & 95.00±0.09          & OOM                 \\
                     & Fast    & 87.44±0.35          & 76.97±0.76          & 78.23±3.76          & 91.32±0.21          & 95.00±0.10          & OOM                 \\
                     & MEGU    & \underline{87.87±0.17}          & \underline{77.60±0.51}          & \underline{85.65±0.71}          & \underline{93.50±0.17}          & \underline{96.01±0.14}          & \underline{66.70±1.95}          \\
                     & EAR     & \textbf{90.17±0.24} & \textbf{78.98±0.71} & \textbf{87.08±0.43} & \textbf{93.63±0.22} & \textbf{96.54±0.11} & \textbf{68.27±0.92} \\ \hline
\multirow{8}{*}{GAT} & Retrain & 88.52±0.26          & 77.66±0.79          & 85.20±0.81          & 93.25±0.46          & 96.57±0.20          & 68.76±0.77          \\
                     & BEKM    & 69.31±1.15          & 52.76±2.92          & 59.45±3.08          & 82.52±0.70          & 91.15±0.41          & \underline{65.38±0.23}          \\
                     & BLPA    & 70.84±0.95          & 48.32±1.71          & 55.17±3.94          & 82.72±0.38          & 91.17±0.36          & 64.82±0.25          \\
                     & GIF     & \underline{88.11±0.37}          & \underline{77.33±1.30}          & \underline{85.35±1.43}          & \underline{93.04±0.22}         & \underline{96.58±0.18}          & 64.38±11.07         \\
                     & SR      & 85.65±0.30          & 75.35±0.87          & 73.91±3.47          & 82.55±3.33          & 91.73±1.89          & OOM                 \\
                     & Fast    & 85.34±0.28          & 75.83±0.73          & 73.25±4.07          & 82.57±4.09          & 90.36±2.70          & OOM                 \\
                     & MEGU    & 84.20±0.48          & 75.59±0.91          & \textbf{87.31±1.06} & 92.24±0.24          & 95.50±0.14          & 59.40±1.75          \\
                     & EAR     & \textbf{88.45±0.41} & \textbf{77.84±0.96} & 84.58±0.93          & \textbf{93.22±0.25} & \textbf{96.66±0.15} & \textbf{69.54±0.35} \\ \hline
\end{tabular}
\caption{Performance comparison in terms of model utility in the feature unlearning task.}
\label{tab:feature_1}
\end{table*}

\begin{table*}[!t]
\centering
\begin{tabular}{cc|cccccc}
\hline
Backbone             & Method  & PubMed         & CiteSeer       & Cora           & CS             & Physics        & ogbn-arxiv     \\ \hline
\multirow{8}{*}{GCN} & Retrain & 42.82s         & 15.64s         & 8.31s          & 357.49s        & 1057.06s       & 173.59s        \\
                     & BEKM    & 76.60s         & 12.92s         & 10.92s         & 66.88s         & 138.76s        & 1314.22s       \\
                     & BLPA    & 65.70s         & 13.16s         & 11.36s         & 69.25s         & 140.64s        & 1424.94s       \\
                     & GIF     & 0.40s          & 0.22s          & 0.21s          & 1.30s          & 1.92s          & 6.87s          \\
                     & SR      & 30.81s         & 8.71s          & 8.80s          & 52.74s         & 149.74s        & OOM            \\
                     & Fast    & 30.71s         & 8.65s          & 8.71s          & 52.15s         & 152.54s        & OOM            \\
                     & MEGU    & \underline{0.30s}          & \underline{0.20s}          & \underline{0.19s}          & \underline{0.53s}          & \underline{1.04s}          & \underline{2.77s}          \\
                     & ETR     & \textbf{0.01s} & \textbf{0.01s} & \textbf{0.01s} & \textbf{0.03s} & \textbf{0.05s} & \textbf{0.06s} \\ \hline
\multirow{8}{*}{GAT} & Retrain & 54.58s         & 18.28s         & 9.96s          & 369.58s        & 1079.54s       & 238.50s        \\
                     & BEKM    & 112.55s        & 20.24s         & 17.11s         & 94.77s         & 182.37s        & 1435.04s       \\
                     & BLPA    & 96.90s         & 20.47s         & 17.72s         & 97.58s         & 188.74s        & 1558.15s       \\
                     & GIF     & 1.11s          & 0.68s          & 0.64s          & 2.10s          & 5.10s          & 11.56s         \\
                     & SR      & 35.80s         & 12.97s         & 12.53s         & 56.25s         & 158.29s        & OOM            \\
                     & Fast    & 35.54s         & 12.22s         & 12.24s         & 58.30s         & 157.83s        & OOM            \\
                     & MEGU    & \underline{0.37s}          & \underline{0.23s}          & \underline{0.22s}          & \underline{0.66s}          & \underline{1.36s}          & \underline{3.64s}          \\
                     & ETR     & \textbf{0.02s} & \textbf{0.02s} & \textbf{0.02s} & \textbf{0.04s} & \textbf{0.06s} & \textbf{0.08s} \\ \hline
\end{tabular}
\caption{Performance comparison in terms of unlearning efficiency in the edge unlearning task.}
\label{tab:edge_2}
\end{table*}

\begin{table*}[!t]
\centering
\begin{tabular}{cc|cccccc}
\hline
Backbone             & Method  & PubMed         & CiteSeer       & Cora           & CS             & Physics        & ogbn-arxiv     \\ \hline
\multirow{8}{*}{GCN} & Retrain & 42.22s         & 15.60s         & 7.88s          & 322.21s        & 943.82s        & 168.91s        \\
                     & BEKM    & 71.08s         & 12.95s         & 11.52s         & 71.71s         & 142.22s        & 1327.08s       \\
                     & BLPA    & 68.97s         & 13.67s         & 11.63s         & 69.52s         & 143.42s        & 1320.60s       \\
                     & GIF     & 0.39s          & 0.22s          & \underline{0.18s}          & 1.30s          & 1.94s          & 6.88s          \\
                     & SR      & 30.34s         & 8.44s          & 8.73s          & 52.22s         & 148.24s        & OOM            \\
                     & Fast    & 30.92s         & 8.62s          & 8.35s          & 52.06s         & 147.64s        & OOM            \\
                     & MEGU    & \underline{0.31s}          & \underline{0.20s}          & 0.19s          & \underline{0.53s}          & \underline{1.05s}          & \underline{2.77s}          \\
                     & ETR     & \textbf{0.01s} & \textbf{0.01s} & \textbf{0.01s} & \textbf{0.03s} & \textbf{0.05s} & \textbf{0.06s} \\ \hline
\multirow{8}{*}{GAT} & Retrain & 53.17s         & 17.84s         & 9.57s          & 332.11s        & 963.44s        & 235.38s        \\
                     & BEKM    & 105.58s        & 20.42s         & 18.14s         & 100.69s        & 198.04s        & 1610.27s       \\
                     & BLPA    & 101.28s        & 21.49s         & 19.09s         & 108.43s        & 231.09s        & 1595.64s       \\
                     & GIF     & 1.12s          & 0.65s          & 0.59s          & 2.10s          & 5.09s          & 11.58s         \\
                     & SR      & 36.00s         & 12.73s         & 12.12s         & 57.75s         & 169.89s        & OOM            \\
                     & Fast    & 35.85s         & 12.25s         & 12.08s         & 55.81s         & 159.12s        & OOM            \\
                     & MEGU    & \underline{0.37s}          & \underline{0.24s}          & \underline{0.22s}          & \underline{0.66s}          & \underline{1.37s}          & \underline{3.67s}          \\
                     & ETR     & \textbf{0.02s} & \textbf{0.02s} & \textbf{0.02s} & \textbf{0.03s} & \textbf{0.06s} & \textbf{0.08s} \\ \hline
\end{tabular}
\caption{Performance comparison in terms of unlearning efficiency in the feature unlearning task.}
\label{tab:feature_2}
\end{table*}

\begin{table*}[!t]
\centering
\begin{tabular}{cc|cccccc}
\hline
Backbone             & Method & PubMed          & CiteSeer         & Cora             & CS               & Physics          & ogbn-arxiv      \\ \hline
\multirow{7}{*}{GCN} & BEKM   & 0.095           & 0.10             & 0.0098           & 0.43             & 0.049            & 1.48            \\
                     & BLPA   & 0.13            & 0.095            & 0.0088           & 0.43             & 0.050            & 1.53            \\
                     & GIF    & 0.072           & 0.0045           & 0.029            & 0.029            & 0.22             & 1.44            \\
                     & SR     & 0.085           & 0.051            & 0.016            & 0.0088           & 0.0019           & OOM             \\
                     & Fast   & 0.083           & 0.050            & 0.016            & 0.0088           & 0.0019           & OOM             \\
                     & MEGU   & \textbf{0.0085} & \underline{0.0023}           & \underline{0.0033}           & \underline{0.0012}           & \underline{0.00059}          & \underline{0.041}           \\
                     & ETR    & \underline{0.015}           & \textbf{0.00015} & \textbf{0.00021} & \textbf{0.00026} & \textbf{0.00011} & \textbf{0.013}  \\ \hline
\multirow{7}{*}{GAT} & BEKM   & \textbf{0.010}  & 0.015            & 0.030            & 0.015            & 0.0065           & 0.12            \\
                     & BLPA   & \underline{0.011}           & 0.015            & 0.031            & 0.015            & 0.005            & 0.12            \\
                     & GIF    & 0.10            & 0.0069           & 0.014            & \underline{0.0017}           & 0.0079           & 1.86            \\
                     & SR     & 0.036           & 0.012            & 0.011            & 0.031            & 0.086            & OOM             \\
                     & Fast   & 0.034           & 0.014            & 0.011            & 0.031            & 0.086            & OOM             \\
                     & MEGU   & 0.082           & \underline{0.0012}           & \underline{0.0023}           & 0.023            & \underline{0.0022}           & \textbf{0.0033} \\
                     & ETR    & \underline{0.011}           & \textbf{0.00040} & \textbf{0.00041} & \textbf{0.00029} & \textbf{0.00014} & \underline{0.016}           \\ \hline
\end{tabular}
\caption{Performance comparison in terms of unlearning efficacy in the edge unlearning task.}
\label{tab:edge_3}
\end{table*}

\begin{table*}[!t]
\centering
\begin{tabular}{cc|cccccc}
\hline
Backbone             & Method & PubMed          & CiteSeer         & Cora             & CS               & Physics          & ogbn-arxiv      \\ \hline
\multirow{7}{*}{GCN} & BEKM   & 0.091           & 0.018            & 0.0092           & 0.035            & 0.055            & 1.43            \\
                     & BLPA   & 0.097           & 0.014            & 0.0092           & 0.036            & 0.059            & 1.52            \\
                     & GIF    & 0.072           & 0.0045           & 0.024            & 0.0098           & 0.035            & 1.55            \\
                     & SR     & 0.082           & 0.0080           & \underline{0.0090}           & 0.0087           & 0.0023           & OOM             \\
                     & Fast   & 0.081           & 0.0079           & 0.0093           & 0.0088           & 0.0023           & OOM             \\
                     & MEGU   & \textbf{0.0084} & \underline{0.0024}           & 0.019            & \underline{0.00035}          & \underline{0.00058}          & \underline{0.041}           \\
                     & ETR    & \underline{0.0090}          & \textbf{0.00011} & \textbf{0.00011} & \textbf{0.00023} & \textbf{0.00010} & \textbf{0.013}  \\ \hline
\multirow{7}{*}{GAT} & BEKM   & \underline{0.011}           & 0.014            & 0.051            & 0.0051           & 0.0084           & 0.65            \\
                     & BLPA   & \underline{0.011}           & 0.013            & 0.055            & 0.0051           & \underline{0.0069}           & 0.66            \\
                     & GIF    & 0.10            & 0.0070           & 0.040            & 0.033            & 0.0089           & 1.79            \\
                     & SR     & 0.024           & 0.0036           & 0.0073           & \underline{0.0042}           & 0.097            & OOM             \\
                     & Fast   & 0.025           & 0.0036           & 0.0072           & 0.0043           & 0.097            & OOM             \\
                     & MEGU   & 0.081           & \underline{0.0012}           & \underline{0.0023}           & 0.023            & 0.012            & \textbf{0.0032} \\
                     & ETR    & \textbf{0.0061} & \textbf{0.00029} & \textbf{0.00023} & \textbf{0.00028} & \textbf{0.00014} & \underline{0.012}           \\ \hline
\end{tabular}
\caption{Performance comparison in terms of unlearning efficacy in the feature unlearning task.}
\label{tab:feature_3}
\end{table*}

\end{document}